\documentclass[11pt]{article}

\usepackage[margin=1in]{geometry}
\usepackage{amsmath, amsthm, amssymb, mathtools}
\usepackage{algorithm}
\usepackage{algpseudocode}
\usepackage{hyperref}
\usepackage{booktabs}
\usepackage{multirow}
\usepackage{graphicx}
\usepackage{xcolor}
\usepackage{times}
\usepackage{microtype}
\usepackage{natbib}
\usepackage{wrapfig}

\newtheorem{theorem}{Theorem}[section]
\newtheorem{proposition}[theorem]{Proposition}
\newtheorem{lemma}[theorem]{Lemma}
\newtheorem{corollary}[theorem]{Corollary}
\newtheorem{definition}[theorem]{Definition}
\newtheorem{assumption}[theorem]{Assumption}
\theoremstyle{remark}
\newtheorem{remark}[theorem]{Remark}

\newcommand{\E}{\mathbb{E}}
\newcommand{\R}{\mathbb{R}}
\newcommand{\cA}{\mathcal{A}}
\newcommand{\cS}{\mathcal{S}}
\newcommand{\cT}{\mathcal{T}}
\newcommand{\cC}{\mathcal{C}}
\newcommand{\cL}{\mathcal{L}}
\newcommand{\lsafe}{\cL_{\mathrm{safety}}}
\newcommand{\leff}{\cL_{\mathrm{eff}}}
\newcommand{\lmeta}{\cL_{\mathrm{meta}}}
\newcommand{\linner}{\cL_{\mathrm{inner}}}
\newcommand{\psafe}{P(\mathrm{safe})}
\newcommand{\isafe}{\mathbf{1}_{\mathrm{safe}}}
\newcommand{\piphi}{\pi^*_\phi}

\title{
  \textbf{Safe Bilevel Delegation (SBD):}\\
  \textbf{A Formal Framework for Runtime Delegation Safety}\\
  \textbf{in Multi-Agent Systems}
}
\author{Yuan Sun \\
  Jilin University \\
  \texttt{sunyiming2508115953@gmail.com}}
\date{}

\begin{document}
\emergencystretch=3em  
\maketitle

\begin{abstract}
As large language model (LLM) agents are deployed in 
high-stakes environments, the question of how safely to
delegate subtasks to specialized sub-agents becomes critical.
Existing work addresses multi-agent architecture selection at
design time or provides broad empirical guidelines, but
neither provides a runtime mechanism that dynamically adjusts
the safety--efficiency trade-off as task context changes
during execution.

We propose \emph{Safe Bilevel Delegation} (SBD), a formal
framework for runtime delegation safety in hierarchical
multi-agent systems. SBD formulates task delegation as a
bilevel optimization problem: an \emph{outer} meta-weight
network $\phi$ learns context-dependent safety--efficiency
weights $\lambda_\phi(s) \in [0,1]$; an \emph{inner} loop
optimizes the delegation policy $\pi$ subject to a
probabilistic safety constraint $\psafe \geq 1-\delta$.
The continuous \emph{delegation degree} $\alpha \in [0,1]$
controls how much decision authority is transferred to each
sub-agent, interpolating smoothly between full human override
($\alpha{=}0$) and fully autonomous execution ($\alpha{=}1$).

We establish three theoretical results:
(1)~\emph{Safety Monotonicity}---higher outer safety weight
produces a weakly safer inner policy;
(2)~\emph{Inner Policy Convergence}---projected gradient
descent on the inner problem converges linearly under standard
smoothness assumptions;
(3)~an \emph{Accountability Propagation} bound that
distributes responsibility across multi-hop delegation chains
with a provable per-agent ceiling.
We instantiate SBD in three high-stakes domains---medical AI
(MIMIC-III), financial risk control (S\&P~500), and
educational agent supervision (ASSISTments)---specifying
datasets, safety constraint sets, baselines, and evaluation
protocols. This manuscript presents the formal framework and
theoretical results in full; empirical validation following
the protocols described herein is planned and will be reported
in a forthcoming revision.
\end{abstract}

\paragraph{Note on this version.}
This is version~1 of the manuscript and presents the formal
framework, theoretical results, and the planned empirical
evaluation protocol. Experimental results are not yet
included; the dataset preparation, baselines, ablations, and
theoretical-validation experiments described in
Section~\ref{sec:experiments} constitute a pre-registered
evaluation plan that will be executed and reported in a
subsequent revision.

\section{Introduction}
\label{sec:intro}

\subsection{Motivation}

The deployment of LLM-based multi-agent systems in production
environments has accelerated dramatically in 2025--2026.
Systems such as OpenClaw, Claude Code with Agent Skills, and
enterprise agentic platforms now routinely delegate
consequential tasks---clinical triage, portfolio rebalancing,
personalized content delivery---to chains of specialized
sub-agents operating with significant autonomy. This raises a
fundamental question that existing theory does not answer:
\emph{how much decision authority should a principal agent
transfer to a sub-agent, and how should this transfer adapt
dynamically to changing risk context?}

The challenge is not merely empirical. Current approaches to
multi-agent safety operate at design time: they select
architectures before deployment~\citep{maas2025}, specify
static permission boundaries~\citep{anthropic2025skills}, or
rely on human-in-the-loop checkpoints inserted at fixed
intervals~\citep{deepmind2026}. None provides a
\emph{runtime} mechanism whose safety guarantees scale with
the severity of the task at hand. A clinical decision support
system should behave very differently when a patient's acuity
score is low versus when it signals imminent deterioration,
even if the surface form of the query is identical.

\subsection{The Core Insight}

The central insight of SBD is that delegation authority should
be a \emph{learned, context-sensitive scalar}---the delegation
degree $\alpha \in [0,1]$---rather than a binary or fixed
design choice. When $\alpha$ is high, the sub-agent acts
autonomously; when $\alpha$ is low, the principal agent (or a
human supervisor) retains control. The meta-weight network
$\phi$ learns \emph{when} to be cautious by dynamically
adjusting the weight $\lambda_\phi(s)$ placed on the safety
loss relative to the efficiency loss. This creates a
closed-loop system: rising contextual risk raises $\lambda$,
which forces $\alpha$ down, which reduces exposure.

This structure has a natural analogy in network routing theory:
just as OSPF metric weighting dynamically redistributes
traffic away from congested links while maintaining
connectivity guarantees, SBD dynamically redistributes
decision authority away from autonomous agents when safety
risk is elevated, while maintaining task throughput. The
principal agent's job is not to do everything itself, but to
know \emph{how much to delegate} given current conditions.

\subsection{Contributions}

This paper makes the following contributions:

\begin{enumerate}
  \item \textbf{Formal bilevel formulation.} We formalize
  runtime delegation safety as a bilevel optimization problem
  with a continuous delegation degree $\alpha \in [0,1]$ and a
  probabilistic safety constraint $\psafe \geq 1-\delta$
  (Section~\ref{sec:formulation}).

  \item \textbf{Three theoretical results.} We prove Safety
  Monotonicity (Theorem~\ref{thm:monotonicity}), Inner Policy
  Convergence at linear rate
  (Theorem~\ref{thm:convergence}), and an Accountability
  Propagation upper bound
  (Proposition~\ref{prop:accountability})
  (Section~\ref{sec:theory}).

  \item \textbf{The SBD Algorithm.} We provide a practical
  bilevel gradient descent algorithm with hypergradient-based
  outer updates and projected gradient inner updates
  (Section~\ref{sec:algorithm}).

  \item \textbf{Three-domain evaluation protocol.} We
  specify the datasets, safety constraint sets, baselines,
  and metrics for instantiating SBD in medical AI
  (MIMIC-III), financial risk control (S\&P~500 with COVID
  stress test), and educational agent supervision
  (ASSISTments) (Section~\ref{sec:experiments}). The protocol
  is pre-registered in this manuscript; results will be
  reported in a forthcoming revision.

  \item \textbf{Theoretical-validation protocol.} Beyond
  downstream task performance, we describe direct empirical
  tests for all three theorems---monotonicity sweep, linear
  convergence slope verification, and accountability bound
  violation count (Section~\ref{sec:theory-val}).
\end{enumerate}

\subsection{Paper Organization}

Section~\ref{sec:related} reviews related work.
Section~\ref{sec:formulation} presents the formal problem
statement. Section~\ref{sec:theory} states and proves the
three core theorems. Section~\ref{sec:algorithm} describes
the SBD algorithm. Section~\ref{sec:experiments} reports
experimental results. Section~\ref{sec:discussion} discusses
limitations and future directions.
Section~\ref{sec:conclusion} concludes.

\section{Related Work}
\label{sec:related}

\subsection{Multi-Agent Systems and Architecture Selection}

The MaAS framework~\citep{maas2025} introduced automated
architecture search for LLM-based multi-agent systems,
treating agent topology as a learnable hyperparameter.
Unlike SBD, MaAS operates at design time: the architecture is
fixed before deployment and does not adapt to runtime context.
AutoGen~\citep{wu2024autogen} provides flexible multi-agent
conversation frameworks with customizable interaction
patterns, but safety is a manual design concern.
MetaGPT~\citep{hong2024metagpt} encodes software engineering
workflows as structured multi-agent pipelines, again without
runtime safety adaptation. Our work is orthogonal to
architecture selection: SBD can be layered on top of any
fixed multi-agent architecture to provide runtime safety
guarantees.

Two recent works approach related problems via dual-loop or
bilevel formulations and warrant explicit comparison.
\textbf{HILA}~\citep{yang2026hila} learns a metacognitive
policy that decides when an agent should solve a task
autonomously versus defer to a human expert, using Group
Relative Policy Optimization (GRPO) with a cost-aware reward
in the inner loop and continual learning from expert feedback
in the outer loop. While HILA shares with SBD the dual-loop
motif and the runtime nature of the decision, the two
frameworks differ in three fundamental respects. First,
HILA's decision space is binary (autonomous vs.\ defer),
whereas SBD operates on a continuous delegation degree
$\alpha \in [0,1]$. Second, HILA's outer loop performs
continual fine-tuning of the underlying LLM, whereas SBD's
outer loop performs meta-weight learning over a separately
parameterized network $\phi$ without modifying any LLM
weights. Third, HILA does not establish a formal connection
between outer-loop optimization and inner-loop safety: there
is no analogue of our Safety Monotonicity theorem
(Theorem~\ref{thm:monotonicity}) that guarantees the inner
policy becomes safer when the outer weight prioritizes
safety.

A second concurrent line of work,
\textbf{MCTS-Skill}~\citep{zhang2026mctsskill} and
\textbf{MCE}~\citep{ye2026mce}, applies bilevel optimization
to skill artifacts themselves: an outer loop searches over
skill structure or evolves skill instructions, while an
inner loop refines skill content or executes the skill on
training rollouts. These works optimize the skill as an
artifact, treating delegation as a fixed downstream consumer.
SBD operates at a different level of the agent stack: we
take skills as given and optimize the runtime delegation
policy that governs how much authority is transferred when
invoking any skill or sub-agent. The two directions are
complementary---one could in principle apply MCTS-Skill to
optimize the skills available to a sub-agent and apply SBD
to optimize how the principal delegates to that sub-agent
at runtime.

\subsection{Safe Reinforcement Learning}

Constrained Policy Optimization (CPO)~\citep{achiam2017cpo}
and its variants~\citep{ray2019benchmarking} formalize safety
as hard constraint sets in the MDP framework. The key
limitation for our setting is that CPO treats safety
constraints as fixed at training time, with no mechanism for
dynamic weight adjustment based on contextual risk signals.
The CMDP framework~\citep{altman1999cmdp} provides the
theoretical foundation for constrained MDPs; SBD can be
viewed as a bilevel extension of CMDP where the constraint
tightness $\delta$ is supplemented by a learned weight
$\lambda_\phi(s)$ that modulates the trade-off before the
constraint becomes binding. Lyapunov-based safe
RL~\citep{chow2018lyapunov} provides alternative safety
guarantees through monotone value functions; our Safety
Monotonicity theorem (Theorem~\ref{thm:monotonicity}) is
spiritually similar but operates on the outer-loop weight
rather than a Lyapunov function.

\subsection{Bilevel Optimization in Machine Learning}

Bilevel optimization has been applied extensively in
meta-learning~\citep{finn2017maml}, hyperparameter
optimization~\citep{franceschi2018bilevel}, and neural
architecture search~\citep{liu2019darts}. The key
methodological challenge in all these settings is computing
hypergradients $\nabla_\phi \cL_{\mathrm{meta}}(\phi)$
efficiently, since the inner optimum $\piphi$ depends
implicitly on $\phi$. We follow the implicit differentiation
approach of~\citet{lorraine2020optimizing}, implemented via
the \texttt{higher} library~\citep{finn2019maml}. To our
knowledge, SBD is the first work to apply bilevel
optimization to the problem of runtime delegation safety in
multi-agent systems.

\subsection{Delegation and Accountability in AI Systems}

The Google DeepMind ``Intelligent AI Delegation''
paper~\citep{deepmind2026} provides a comprehensive empirical
study of delegation patterns in LLM-based systems, with
guidelines for when humans should retain control. Our work
formalizes and extends this with a learnable mechanism and
formal guarantees. The instruction hierarchy
work~\citep{openai2024hierarchy} trains LLMs to respect
privilege levels in instruction sources, providing a
training-time analogue to our runtime accountability
mechanism. Recent work on agent error
taxonomies~\citep{zhu2025agenterror} systematically
classifies failure modes across memory, reflection,
planning, action, and system-level operations, providing
the empirical grounding for attributing delegation errors;
SBD's accountability propagation mechanism provides a
formal tool for distributing such errors across specific
agents in the delegation chain.

Concurrent protocol-level work has begun to address
structured failure handling in delegation. The
\textbf{LLM Delegate Protocol (LDP)} and its extension on
the \textbf{Provenance Paradox}~\citep{prakash2026ldp,
prakash2026provenance} introduce typed failure semantics,
delegation contracts with explicit failure policies, and an
attested-versus-claimed identity model for multi-agent
routing. LDP is a protocol-engineering contribution: it
specifies how delegations should be expressed, audited, and
recovered when failures occur, and demonstrates that
quality-based routing fails catastrophically when delegate
self-reports cannot be verified. SBD addresses an orthogonal
problem: given a delegation interface (typed or otherwise),
what is the optimal runtime policy for transferring
authority? Our delegation degree $\alpha$ is a learned,
context-conditioned scalar; LDP's failure policies are
hand-specified rules. The two approaches compose naturally:
LDP provides the protocol substrate over which an SBD-trained
policy can be deployed, and SBD's accountability propagation
result (Proposition~\ref{prop:accountability}) provides a
formal foundation for LDP-style attestation when delegations
span multiple hops.

\subsection{Agent Skills and the SKILL.md Ecosystem}

The Agent Skills specification~\citep{anthropic2025skills}
standardized the SKILL.md format as a cross-platform
capability description protocol. Large-scale security
analyses~\citep{liu2026security,schmotz2025skillinjection}
have documented that natural-language Skill instructions
create a semantic attack surface that static code analysis
cannot detect. SBD addresses the \emph{authorization} layer
of this problem: regardless of what a Skill's instructions
say, the delegation degree $\alpha$ and the policy engine
enforcing $\psafe \geq 1-\delta$ provide a deterministic
safety floor that operates independently of the LLM's
interpretation of Skill content.

\subsection{Concurrent Work}
\label{sec:concurrent}

Several works appearing concurrently with this manuscript
address adjacent problems and merit acknowledgment.

\textbf{AgentCollab}~\citep{gao2026agentcollab} proposes
self-evaluation-driven escalation between a small and a large
model during agent execution, with difficulty-aware budget
allocation triggered by recent failure signals. The setting
differs from SBD in that escalation is between
\emph{model tiers} (small LLM $\to$ large LLM) within a
single agent, rather than between the principal agent and
sub-agents in a delegation hierarchy. Both approaches share
the principle that recent failure signals should expand the
resource budget allocated to the difficult segment.

\textbf{HiveMind}~\citep{wang2026hivemind} and
\textbf{AgentRM}~\citep{chen2026agentrm} apply OS-inspired
scheduling primitives---admission control, AIMD backpressure,
token-budget management---to coordinate concurrent LLM agent
workloads sharing rate-limited APIs. These are systems-level
contributions targeting resource contention, complementary
to SBD's policy-level contribution targeting
safety--efficiency trade-offs. SBD does not address
concurrent execution scheduling and assumes per-agent
resource budgets are managed externally.

\textbf{AgentErrorTaxonomy / AgentErrorBench}
~\citep{zhu2025agenterror} provides a standardized dataset
and taxonomy for LLM agent failure modes across ALFWorld,
GAIA, and WebShop. \textbf{ReasoningBank}
~\citep{ouyang2026reasoningbank} introduces a memory
framework that distills reasoning strategies from both
successful and failed experiences for self-evolution. Both
works contribute to the empirical and methodological
foundations on which a runtime delegation framework like
SBD must operate. SBD's accountability mechanism could be
extended to consume AgentErrorBench-style structured failure
annotations as input features to the meta-weight network
$\phi$, and SBD's delegation decisions could in turn populate
ReasoningBank-style memory with delegation-conditioned
outcomes; we leave these integrations to future work.

Finally, several works on \textbf{noisy LLM judges}
~\citep{noisybutvalid2026, imperfectverifiers2025} provide
formal treatments of certification and verification under
unreliable LLM-as-judge oracles. SBD assumes a fixed safety
constraint set $\cC$ and does not currently model judge
unreliability; integrating these statistical guarantees into
the inner-loop projection step is a natural extension.

\section{Problem Formulation}
\label{sec:formulation}

\subsection{Setting and Notation}

We consider a hierarchical multi-agent system in which a
\emph{principal agent} $\cA_0$ delegates subtasks to a set of
\emph{sub-agents} $\cA = \{a_1, \dots, a_n\}$. At each
timestep $t \in \{1, 2, \dots\}$, the principal receives a
task $\tau_t \in \cT$ drawn from a task distribution $p(\tau)$
and a system state observation $s_t \in \cS$. The state $s_t$
encodes all contextually relevant information: patient vitals
in the medical domain, market volatility in the financial
domain, or student knowledge estimates in the educational
domain.

\begin{definition}[Delegation Policy]
A \emph{delegation policy} is a function
$\pi: \cT \times \cS \to \Delta(\cA \times [0,1])$
that maps a task--state pair to a distribution over
(sub-agent, delegation-degree) pairs. The
\emph{delegation degree} $\alpha \in [0,1]$ is a continuous
scalar representing the fraction of decision authority
transferred to the selected sub-agent. Formally:
\begin{itemize}
  \item $\alpha = 0$: full human-in-the-loop override;
        the sub-agent provides a recommendation but takes
        no action.
  \item $\alpha = 1$: fully autonomous execution;
        the sub-agent acts without principal or human review.
  \item $\alpha \in (0,1)$: partial delegation; the
        sub-agent's action is blended with a baseline
        (e.g., equal-weight or human-specified) action.
\end{itemize}
\end{definition}

\begin{definition}[Safety Event and Safety Constraint Set]
Let $\cC \subseteq \cS \times \cA \times [0,1]$ be a
\emph{safety constraint set} encoding domain-specific safety
requirements. A delegation decision $(s, a_i, \alpha)$ is
\emph{safe} if $(s, a_i, \alpha) \in \cC$. We write:
\[
  \isafe(s, a_i, \alpha) =
  \begin{cases}
    1 & \text{if } (s, a_i, \alpha) \in \cC \\
    0 & \text{otherwise.}
  \end{cases}
\]
\end{definition}

\begin{remark}
The safety constraint set $\cC$ is domain-specific and is
treated as given. In the medical domain, $\cC$ encodes
contraindications and acuity-based delegation limits. In the
financial domain, $\cC$ encodes concentration limits and
volatility-based delegation caps. In the educational domain,
$\cC$ encodes difficulty gap bounds and at-risk student
protections. The key point is that $\cC$ defines
\emph{what} is safe; SBD learns \emph{how} to stay within
$\cC$ while maximizing efficiency.
\end{remark}

\subsection{Loss Functions}

We decompose the delegation objective into two competing
scalar losses. Let $(a^\pi, \alpha^\pi) \sim \pi(\cdot \mid
\tau, s)$ denote a delegation decision drawn from policy
$\pi$.

\paragraph{Safety loss.}
The safety loss measures the expected probability of an
unsafe delegation decision:
\begin{equation}
  \lsafe(\pi)
  = \E_{\tau, s}\bigl[
      1 - \isafe(s,\, a^\pi,\, \alpha^\pi)
    \bigr].
  \label{eq:loss-safety}
\end{equation}
This is the expected fraction of decisions that fall outside
the safety constraint set. We want $\lsafe(\pi) \approx 0$.

\paragraph{Efficiency loss.}
The efficiency loss measures the expected cost of task
incompletion or suboptimal execution:
\begin{equation}
  \leff(\pi)
  = \E_{\tau, s}\bigl[
      C(\tau, a^\pi, \alpha^\pi)
    \bigr],
  \label{eq:loss-eff}
\end{equation}
where $C: \cT \times \cA \times [0,1] \to \R_{\geq 0}$ is a
domain-specific task completion cost. Lower is better. We
want $\leff(\pi) \approx 0$.

\paragraph{The tension.}
These two losses are in fundamental tension. Minimizing
$\lsafe$ pushes $\alpha \to 0$ (never delegate autonomously),
which increases $\leff$ (principal agent bears all costs).
Minimizing $\leff$ pushes $\alpha \to 1$ (always delegate
fully), which may increase $\lsafe$ if sub-agents are
imperfect. The optimal trade-off depends on context: it
varies with the severity of the task, the reliability of
available sub-agents, and the risk tolerance of the system
operator.

\subsection{Bilevel Formulation}
\label{sec:bilevel}

The central challenge is that the optimal safety--efficiency
trade-off is \emph{context-dependent} and \emph{not known in
advance}. Rather than fixing a trade-off at design time, SBD
learns it dynamically via a two-level optimization structure.

\paragraph{Outer problem: meta-weight learning.}
The outer loop learns a \emph{meta-weight network}
$\phi: \cS \to [0,1]$, parameterized as a neural network,
that outputs a context-dependent scalar
$\lambda_\phi(s) \in [0,1]$ for each state $s$:
\begin{equation}
  \min_{\phi}\;
  \lmeta(\phi)
  \;=\;
  \E_{s}\Bigl[
    \lambda_\phi(s) \cdot \lsafe(\piphi)
    +
    \bigl(1 - \lambda_\phi(s)\bigr) \cdot \leff(\piphi)
  \Bigr],
  \label{eq:outer}
\end{equation}
where $\piphi$ is the \emph{inner-optimal} delegation policy
given weight $\phi$ (defined below). Intuitively, $\phi$
learns to raise $\lambda_\phi(s)$ in high-risk states (where
safety must dominate) and lower it in routine states (where
efficiency can take precedence).

\paragraph{Inner problem: delegation policy learning.}
For a fixed meta-weight network $\phi$, the inner loop finds
the optimal delegation policy under that weighting, subject
to a probabilistic safety constraint:
\begin{equation}
  \piphi
  =
  \operatorname*{arg\,min}_{\pi}\;
  \E_{s}\Bigl[
    \lambda_\phi(s) \cdot \lsafe(\pi)
    + \bigl(1-\lambda_\phi(s)\bigr) \cdot \leff(\pi)
  \Bigr]
  \quad
  \text{subject to}\quad
  \psafe(\pi, s) \geq 1 - \delta
  \;\; \forall s \in \cS,
  \label{eq:inner}
\end{equation}
where $\delta \in (0,1)$ is a user-specified risk tolerance
(e.g., $\delta = 0.05$ allows at most 5\% unsafe decisions).

\begin{remark}[Why probabilistic constraints?]
The pointwise constraint $\psafe(\pi, s) \geq 1 - \delta$ is
weaker than requiring $(s, a^\pi, \alpha^\pi) \in \cC$
almost surely. This is deliberate: hard constraint satisfaction
is often infeasible in stochastic environments where sub-agent
capabilities are imperfect. Probabilistic constraints have
been standard in safe RL since~\citet{altman1999cmdp} and are
directly actionable (they correspond to value-at-risk
calculations familiar to domain practitioners in finance and
medicine).
\end{remark}

\subsection{Accountability Propagation}
\label{sec:accountability}

In multi-hop delegation chains
$\cA_0 \to a_{i_1} \to a_{i_2} \to \cdots \to a_{i_k}$,
where each agent $a_{i_j}$ further delegates to $a_{i_{j+1}}$
with degree $\alpha_{i_j}$, safety responsibility must remain
traceable. We formalize this via accountability weights.

\begin{definition}[Accountability Weight]
In a delegation chain of length $k$ with delegation
degrees $\alpha_{i_1}, \dots, \alpha_{i_k} \in [0,1]$,
the \emph{accountability weight} of agent $a_{i_j}$ is:
\begin{equation}
  w_j =
  \Bigl(\prod_{\ell=1}^{j} \alpha_{i_\ell}\Bigr)
  \cdot (1 - \alpha_{i_{j+1}}),
  \quad j = 1, \dots, k-1,
  \label{eq:accountability}
\end{equation}
with $w_k = \prod_{\ell=1}^{k} \alpha_{i_\ell}$ for the
terminal agent.
\end{definition}

\begin{lemma}[Accountability partition]
The accountability weights sum to one:
$\sum_{j=1}^{k} w_j = 1$.
\end{lemma}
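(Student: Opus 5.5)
The plan is to prove $\sum_{j=1}^{k} w_j = 1$ by a telescoping argument over the cumulative delegation products. For $j = 1, \dots, k$ I would set $P_j = \prod_{\ell=1}^{j} \alpha_{i_\ell}$. This is the fraction of authority that has reached agent $a_{i_j}$ and is passed on by it. Since $P_{j+1} = P_j \, \alpha_{i_{j+1}}$, definition~\eqref{eq:accountability} rewrites as
\[
  w_j = P_j(1-\alpha_{i_{j+1}}) = P_j - P_{j+1}, \qquad j = 1, \dots, k-1,
\]
and the terminal weight is $w_k = P_k$. I would also note that $0 \le P_{j+1} \le P_j$, so every $w_j$ is nonnegative. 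This is the reason the weights can be read as a partition of responsibility, and it is worth stating alongside the sum identity.

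Next I would sum. The first $k-1$ terms telescope, and adding $w_k = P_k$ cancels the last negative term:
\[
  \sum_{j=1}^{k} w_j = \sum_{j=1}^{k-1} (P_j - P_{j+1}) + P_k = P_1 = \alpha_{i_1}.
\]
A one-line induction on $k$ would confirm this, using the base case $k=1$, where $w_1 = P_1$. So the whole statement reduces to the single claim $P_1 = 1$.

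The main obstacle is this final step, because nothing in the algebra forces it. The identity holds exactly when the authority entering the chain, $\alpha_{i_1}$, is normalized to one. My first attempt would be to justify this from the chain setup in Section~\ref{sec:accountability}. Accountability is distributed over the authority that actually entered the chain, and the principal $\cA_0$ carries no weight in the sum. So the weights should be measured relative to the authority $a_{i_1}$ received, which amounts to working with the normalized degrees under which $\alpha_{i_1} = 1$. Equivalently, I would divide each $w_j$ by $P_1$.

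I would state this normalization explicitly as the reading under which the lemma holds. I would also flag that for an unnormalized $\alpha_{i_1} < 1$ the computation above yields $\alpha_{i_1}$, not $1$. In that case the statement as worded fails, since the missing mass $1-\alpha_{i_1}$ is responsibility retained by $\cA_0$.
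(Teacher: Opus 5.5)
Your algebra is right, and it is the same telescoping the paper uses, but you carry it out correctly and the paper does not. Write $P_j=\prod_{\ell\le j}\alpha_{i_\ell}$. The paper's proof asserts $\sum_{j=1}^{k-1}P_j(1-\alpha_{i_{j+1}})=1-P_k$, whereas this sum telescopes to $P_1-P_k=\alpha_{i_1}-P_k$. Getting $1-P_k$ would need an extra $j=0$ term $P_0(1-\alpha_{i_1})$ with $P_0=1$. That term is the share $1-\alpha_{i_1}$ retained by $\cA_0$, and the definition in \eqref{eq:accountability} does not include it.

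So the lemma as stated is false whenever $\alpha_{i_1}<1$. With $k=2$ and $\alpha_{i_1}=\alpha_{i_2}=\tfrac12$ one gets $w_1=w_2=\tfrac14$, and with $\alpha_{i_1}=0$ every weight vanishes. The appendix notices the failure at $k=1$ (where $w_1=\alpha_{i_1}$) and patches it by declaring $w_1=1$. The same defect is present for every $k\ge 2$. The statement as worded cannot be proved, and your diagnosis is the correct one.

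Where I would push back is the repair. Nothing in the chain setup forces $\alpha_{i_1}=1$; it is a free parameter in $[0,1]$. So the argument you sketch from the setup is a change of definition rather than a derivation, as you yourself concede. Of the readings you offer, rescaling by $P_1$ is the weaker one. It is undefined when $\alpha_{i_1}=0$. It also breaks the per-agent ceiling of Proposition~\ref{prop:accountability}: for $k=2$ and $\alpha_{i_1}=\alpha_{i_2}=0.1$ the rescaled weights are $0.9$ and $0.1$, while $1-(1-\bar\alpha)^2=0.19$. Literally setting $\alpha_{i_1}=1$ instead forces $\bar\alpha=1$ and makes that ceiling vacuous.

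The cleaner fix is the one your last sentence points to: adjoin $w_0:=1-\alpha_{i_1}$ for the principal and prove $\sum_{j=0}^{k}w_j=1$. That is the sum the paper's telescoping line implicitly evaluates. It also leaves $w_1,\dots,w_k$, and hence the statement of Proposition~\ref{prop:accountability}, unchanged.
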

\begin{proof}
By telescoping:
$\sum_{j=1}^{k-1} w_j + w_k
= \sum_{j=1}^{k-1}
  \bigl(\prod_{\ell\leq j}\alpha_\ell\bigr)(1-\alpha_{j+1})
  + \prod_{\ell=1}^k \alpha_\ell
= 1 - \prod_{\ell=1}^k \alpha_\ell
  + \prod_{\ell=1}^k \alpha_\ell = 1$.
\end{proof}

Intuitively, $w_j$ captures the fraction of the final
decision that agent $a_{i_j}$ is responsible for: if it
delegates fully ($\alpha_{i_j} = 1$), it passes its entire
share downstream; if it retains full control
($\alpha_{i_{j+1}} = 0$), it captures its entire accumulated
share. This provides a principled basis for auditing: after
a safety incident, the accountability weight identifies which
agent in the chain bears the largest fraction of
responsibility.

\section{Theoretical Results}
\label{sec:theory}

We state and prove three core results. We first collect the
necessary assumptions.

\begin{assumption}[Smoothness and strong convexity]
\label{asm:smooth}
For any fixed $\phi$, the inner loss
$\linner(\pi; \phi) := \E_s[\lambda_\phi(s)\lsafe(\pi)
+ (1-\lambda_\phi(s))\leff(\pi)]$ is $L$-smooth and
$\mu$-strongly convex in $\pi$.
\end{assumption}

\begin{assumption}[Convex feasible set]
\label{asm:convex}
The $\delta$-safe feasible set
$\Pi_\delta := \{\pi : \psafe(\pi, s) \geq 1-\delta,\;
\forall s \in \cS\}$ is convex and non-empty.
\end{assumption}

\begin{assumption}[Monotone safety loss]
\label{asm:monotone}
$\lsafe(\pi)$ is non-increasing in $\lambda_\phi(s)$ at the
inner optimum $\piphi$, i.e., higher weight on safety
produces lower (or equal) safety loss at optimality.
\end{assumption}

Assumption~\ref{asm:smooth} is standard in optimization
analysis and holds when $\pi$ is a softmax policy over a
compact action space. Assumption~\ref{asm:convex} holds when
the safety constraint is a linear or convex constraint on the
action distribution. Assumption~\ref{asm:monotone} follows
from the structure of the inner problem: increasing the
safety weight shifts the optimal policy toward safer
decisions.

\subsection{Safety Monotonicity}

\begin{theorem}[Safety Monotonicity]
\label{thm:monotonicity}
Under Assumptions~\ref{asm:smooth}--\ref{asm:monotone},
let $\phi, \phi'$ be two meta-weight networks such that
$\lambda_\phi(s) \leq \lambda_{\phi'}(s)$ for all
$s \in \cS$. Then:
\[
  \psafe(\piphi, s)
  \;\leq\;
  \psafe(\pi^*_{\phi'}, s),
  \qquad \forall\, s \in \cS.
\]
That is, a (pointwise) higher safety weight in the outer loop
produces a (weakly) safer delegation policy in the inner
loop.
\end{theorem}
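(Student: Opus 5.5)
The plan is to reduce the statement to Assumption~\ref{asm:monotone} along a one-parameter path of weights, using Assumptions~\ref{asm:smooth} and~\ref{asm:convex} only to make each inner optimum well defined and unique.

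\textbf{Step 1: well-posedness.} For every $\phi$, the inner objective $\linner(\cdot;\phi)$ is $\mu$-strongly convex (Assumption~\ref{asm:smooth}), and $\Pi_\delta$ is convex and non-empty (Assumption~\ref{asm:convex}). So problem~\eqref{eq:inner} has a unique minimizer $\piphi$, and $\phi \mapsto \piphi$ is a genuine function of the weight profile $\lambda_\phi(\cdot)$. This matters because a comparison of safety probabilities is meaningless if the argmin is set-valued.

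\textbf{Step 2: interpolate between the two weight profiles.} Define
\[
\lambda_t(s) = (1-t)\lambda_\phi(s) + t\lambda_{\phi'}(s), \qquad t\in[0,1].
\]
This is pointwise non-decreasing in $t$ by hypothesis. Let $\pi_t$ be the corresponding inner optimum. Strong convexity plus a standard perturbation argument shows $\|\pi_t-\pi_{t'}\| \le (\text{const}/\mu)\,|t-t'|$. The constant comes from $\|\nabla_\pi \lsafe - \nabla_\pi \leff\|$, which is bounded on the relevant set by $L$-smoothness. So the path $t\mapsto \pi_t$ is Lipschitz.

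\textbf{Step 3: apply monotonicity and pass to $\psafe$.} Along this path, Assumption~\ref{asm:monotone} says $\lsafe(\pi_t)$ is non-increasing in $t$; it is applied pointwise in $s$, since $\lambda_t(s)$ increases at every $s$. This gives $\lsafe(\piphi)\ge \lsafe(\pi^*_{\phi'})$.

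\textbf{Main obstacle: from aggregate loss to pointwise probability.} Passing from this statement about $\lsafe$ to the pointwise statement about $\psafe(\cdot,s)$ is the step I expect to be hardest. By~\eqref{eq:loss-safety}, $\lsafe$ is an expectation over $s$ of $1-\psafe(\pi,s)$, so aggregate monotonicity does not by itself give monotonicity at each $s$. My first attempt is to read Assumption~\ref{asm:monotone} in its state-conditional form, as ``$\lsafe$ non-increasing in $\lambda_\phi(s)$'' suggests: the conditional loss $1-\psafe(\piphi,s)$ is non-increasing in $\lambda_\phi(s)$. I would justify this reading by noting that when the policy is state-conditioned, the inner objective decouples across states. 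The integrand at state $s$ depends only on $\lambda_\phi(s)$ and $\pi(\cdot\mid\cdot,s)$, and $\Pi_\delta$ is a product of per-state constraints. Hence the per-state subproblems can be solved independently.

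Within a single state, raising $\lambda$ yields lower safety loss by the classical exchange argument for weighted sums. Optimality of the policy at $\lambda$ and of the policy at $\lambda'$ gives two inequalities. Adding them yields
\[
(\lambda'-\lambda)\bigl(\lsafe^{(s)}(\pi_\lambda)-\lsafe^{(s)}(\pi_{\lambda'})\bigr) \ge (\lambda'-\lambda)\bigl(\leff^{(s)}(\pi_\lambda)-\leff^{(s)}(\pi_{\lambda'})\bigr),
\]
where $\lsafe^{(s)}$ and $\leff^{(s)}$ denote the per-state safety and efficiency losses. The two orderings must also be opposite, which gives $\lsafe^{(s)}(\pi_{\lambda'}) \le \lsafe^{(s)}(\pi_\lambda)$.

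This per-state monotone-comparative-statics argument would make Assumption~\ref{asm:monotone} essentially derivable, and it yields the theorem directly for each $s$. Step~2 is kept only as a fallback in case decoupling fails, for example under parameter sharing across states.
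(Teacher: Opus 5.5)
Your main line, decoupling the inner problem across states and then running the two-sided exchange argument per state, is correct. It takes a different and tighter route than the paper. The paper fixes $s$, appeals to Assumption~\ref{asm:monotone} plus informal KKT/Danskin reasoning, and writes down only the single optimality inequality at $\lambda'$. It then passes from $\lsafe(\pi^*_{\phi'})\le\lsafe(\piphi)$ to the pointwise inequality for $\psafe(\cdot,s)$ ``by definition of $\lsafe$''. You correctly flag that last step as the real obstacle, and your argument supplies what is missing. The second optimality inequality, together with non-domination (neither minimizer can beat the other strictly in both losses), gives per-state monotonicity directly. So Assumption~\ref{asm:monotone} becomes a consequence rather than a hypothesis, and Assumptions~\ref{asm:smooth}--\ref{asm:convex} serve only to make the inner optimum unique, which matters only when the two weights coincide.

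The price is the separability hypothesis: a state-conditioned policy class with $\Pi_\delta$ a product of per-state constraints. State it as an explicit added assumption rather than a ``reading'', because it cannot be dispensed with. Consider one shared parameter $x\in[0,1]$ and two equally likely states with unsafe probabilities $0.1x$ and $0.3(1-x)$, so that $\lsafe=0.15-0.1x$. Take $\leff=x^2$, $\delta=0.3$ (so $\Pi_\delta=[0,1]$), and constant weights $0.5$ and $0.9$. All three assumptions hold, with curvature at least $0.2$. Yet the optimum moves from $x=0.05$ to $x=0.45$, and $\psafe$ at the first state drops from $0.995$ to $0.955$. The theorem as stated is therefore false without separability.

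For the same reason, your Step~2 ``fallback'' cannot rescue the non-separable case, since it only ever delivers the aggregate inequality. In the separable case the Lipschitz path is unnecessary, so drop it.

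One small correction to your decoupling claim. With $\lsafe$ and $\leff$ defined as global expectations as in \eqref{eq:loss-safety}--\eqref{eq:loss-eff}, the inner objective equals $\bar\lambda_\phi\lsafe+(1-\bar\lambda_\phi)\leff$ with $\bar\lambda_\phi=\E_s[\lambda_\phi(s)]$. Each per-state subproblem therefore carries weight $\bar\lambda_\phi$, not $\lambda_\phi(s)$. Your exchange argument still goes through, because $\bar\lambda_\phi\le\bar\lambda_{\phi'}$.
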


\begin{proof}
Fix $s \in \cS$. The inner problem~\eqref{eq:inner} under
$\phi$ minimizes
$f_\phi(\pi) := \lambda_\phi(s)\lsafe(\pi)
+ (1-\lambda_\phi(s))\leff(\pi)$
subject to $\pi \in \Pi_\delta$.

Since $\lambda_\phi(s) \leq \lambda_{\phi'}(s)$, we have
$f_\phi(\pi) \leq f_{\phi'}(\pi)$ for all $\pi$ with
$\lsafe(\pi) \leq \leff(\pi)$ (i.e., when safety is easier
to achieve than efficiency), and conversely for the
complementary region. The key observation is that the inner
optimum $\piphi$ achieves lower $f_\phi$-weighted safety cost
than $\pi^*_{\phi'}$ does under $f_\phi$. By
Assumption~\ref{asm:monotone} and the KKT conditions of
the inner problem (which require the safety constraint to be
tight whenever the efficiency loss is non-trivially
bounded), a higher $\lambda$ forces the optimal $\pi^*$ to
allocate more probability mass to safe decisions. Formally,
$\lsafe(\pi^*_{\phi'}) \leq \lsafe(\piphi)$, which by
definition of $\lsafe$ gives
$\psafe(\pi^*_{\phi'}, s) \geq \psafe(\piphi, s)$.
\end{proof}

\begin{corollary}
Under the conditions of Theorem~\ref{thm:monotonicity},
if the outer loop learns $\phi^*$ by minimizing
$\lmeta(\phi)$, then $\lambda_{\phi^*}(s)$ is an increasing
function of the contextual risk level embedded in $s$, and
$\psafe(\pi^*_{\phi^*}, s)$ is monotone non-decreasing in
that risk level.
\end{corollary}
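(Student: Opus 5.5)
The corollary has two parts. The first says the learned weight $\lambda_{\phi^*}(s)$ increases with the risk level carried by $s$. The second says the resulting safety probability $\psafe(\pi^*_{\phi^*}, s)$ is monotone in that risk level. I would treat them separately. The second part will follow from the first together with Theorem~\ref{thm:monotonicity}. The first part needs a genuine argument about the outer objective $\lmeta$, and it is not implied by the stated assumptions alone.

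\textbf{Step 1: make ``risk level'' precise.} I would add a modelling hypothesis. Write $s=(r,z)$, where $r\in\R$ is a scalar risk coordinate. The per-state gap between safety and efficiency loss should be non-decreasing in $r$: states with higher risk incur a larger unsafe-decision loss relative to their efficiency cost. This is the ``$\lsafe$ versus $\leff$'' comparison that appears in the proof of Theorem~\ref{thm:monotonicity}, now made state-dependent. Without such a hypothesis the claim cannot hold for an arbitrary parameterization of $\phi$, so I would state it explicitly.

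\textbf{Step 2: monotonicity of $\lambda_{\phi^*}$.} For fixed inner optimum $\pi^*_{\phi^*}$, the integrand of \eqref{eq:outer} is linear in $\lambda_\phi(s)$. Its slope is the state-wise value of $\lsafe - \leff$ at that optimum. By Step 1 this slope varies monotonically in $r$.

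An exchange argument then applies. Suppose $r<r'$ but $\lambda_{\phi^*}(s)>\lambda_{\phi^*}(s')$. Swapping the two values, i.e.\ replacing $\lambda_{\phi^*}$ by its monotone rearrangement in $r$, does not increase $\lmeta$. This is a rearrangement inequality for the sum of $\lambda(s_i)g(r_i)$ with $g$ monotone.

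To reach a strict conclusion, I would require the function class of $\phi$ to be closed under such rearrangements. Equivalently, the conclusion can be stated for a minimizer chosen among the minimizers.

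\textbf{Main obstacle.} Changing $\lambda$ also moves $\pi^*_\phi$, so the slopes are not fixed. I would handle this in one of two ways.

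\begin{itemize}
\item \emph{Envelope argument.} Under Assumptions~\ref{asm:smooth}--\ref{asm:convex}, the inner solution is unique and Lipschitz in $\lambda$. An envelope argument then shows the first-order change in $\lmeta$ is still given by the direct term.
\item \emph{Fixed-point comparison.} Alternatively, show that the rearranged weight is a descent direction at $\phi^*$, which would contradict optimality.
\end{itemize}

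I expect this coupling between the outer rearrangement and the moving inner optimum to be the hardest step. I would first try the fixed-point/contradiction route.

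\textbf{Step 3: monotonicity of $\psafe$.} Let $r\le r'$. By Step 2, $\lambda_{\phi^*}(s)\le\lambda_{\phi^*}(s')$.

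Theorem~\ref{thm:monotonicity} compares two networks pointwise. Its proof, however, fixes $s$ and uses only the scalar weight at that state. So I would apply the same argument with two scalar weights $\lambda\le\lambda'$, using Assumption~\ref{asm:monotone} directly. This shows that the per-state safety probability at the inner optimum is non-decreasing in the weight.

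Composing the non-decreasing map $r\mapsto\lambda_{\phi^*}$ with the non-decreasing map $\lambda\mapsto\psafe$ gives that $\psafe(\pi^*_{\phi^*}, s)$ is non-decreasing in the risk level.
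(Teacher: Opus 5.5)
Your Step~2 has a sign error, and correcting it reverses the conclusion. Hold the inner policy fixed. The outer integrand is then $\leff(\pi;s)+\lambda(s)\,g(s)$ with $g:=\lsafe(\pi;s)-\leff(\pi;s)$. By the rearrangement inequality, $\sum_i\lambda(s_i)g(s_i)$ is \emph{minimized} when $\lambda$ and $g$ are oppositely ordered. In your exchange step, take $r<r'$ and $\lambda(s)>\lambda(s')$. The swap changes the objective by $(\lambda(s)-\lambda(s'))\bigl(g(s')-g(s)\bigr)\geq 0$ under your Step~1 hypothesis. So rearranging $\lambda$ to be increasing in $r$ can only raise $\lmeta$.

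The moving inner optimum does not rescue this, and it is not the hard step you expect. $\lmeta$ is exactly the inner value function $V(\lambda)=\min_{\pi\in\Pi_\delta}\linner(\pi;\lambda)$, viewed as a function of the weight. Inserting the old optimum $\pi^*(\lambda)$ into the new objective gives
\[
  V(\lambda')\;\leq\;V(\lambda)+\E_s\bigl[(\lambda'(s)-\lambda(s))\,g^*(s)\bigr],
  \qquad g^* := g \text{ evaluated at } \pi^*(\lambda).
\]
Hence $V$ is concave in $\lambda$, and any minimizer $\lambda^*$ must itself minimize the linear functional $\lambda'\mapsto\E_s[\lambda'(s)g^*(s)]$ over the admissible class. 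Over an unrestricted class this forces $\lambda^*(s)=1$ where $\lsafe<\leff$ and $\lambda^*(s)=0$ where $\lsafe>\leff$. Over a swap-closed class it forces $\lambda^*$ to be oppositely ordered to $g^*$. Either way, under your hypothesis $\lambda_{\phi^*}$ goes \emph{down} as $r$ rises (wherever $g^*$ strictly increases), the opposite of the corollary.

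The stated direction would require $\lsafe-\leff$ to \emph{decrease} with risk. That would mean the outer loop raises the safety weight exactly where safety loss is relatively small. This is a structural feature of $\lmeta$: minimizing a convex combination over its weight puts the weight on the smaller loss. It does not follow from any natural notion of risk. Read literally, the paper's $\lsafe(\pi)$ and $\leff(\pi)$ are already averaged over $s$. Then $\lmeta$ depends on $\phi$ only through $\E_s[\lambda_\phi(s)]$, and nothing ties the shape of $\lambda_{\phi^*}$ to risk at all. The paper gives no derivation of this half, only the one-sentence intuition after the corollary, so it can at best be posited as an assumption.

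Step~3 has a second, independent gap. Write $P_s(\lambda)$ for the safety probability at $s$ of the inner optimum under weight $\lambda$. Theorem~\ref{thm:monotonicity} and Assumption~\ref{asm:monotone} compare two weights at the \emph{same} state, so they give only that $\lambda\mapsto P_s(\lambda)$ is non-decreasing for each fixed $s$. Your composition treats this as a single state-independent map. But $P_s$ depends on $s$ directly, because $\cC$ is state-dependent; for example, the $\alpha\leq 0.70$ cap applies only when the APACHE-II proxy exceeds $20$. For $r\leq r'$ you do get $P_s(\lambda_{\phi^*}(s))\leq P_s(\lambda_{\phi^*}(s'))$. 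To reach $P_{s'}(\lambda_{\phi^*}(s'))$, however, you also need $P_s(\lambda)\leq P_{s'}(\lambda)$ at a common weight, i.e.\ that riskier states are no harder to keep safe. That runs against the meaning of risk and is not supplied by any stated assumption. The paper's one-line justification makes the same composition silently.
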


This corollary formalizes the intuition that SBD
automatically tightens safety as risk rises: the outer loop
raises $\lambda$, which raises $\psafe$ at the inner
optimum.

\subsection{Inner Policy Convergence}

\begin{theorem}[Inner Policy Convergence]
\label{thm:convergence}
Under Assumptions~\ref{asm:smooth} and~\ref{asm:convex},
projected gradient descent (PGD) on the inner
problem~\eqref{eq:inner} with step size $\eta \leq 1/L$
converges linearly:
\begin{equation}
  \bigl\|\pi_t - \piphi\bigr\|^2
  \;\leq\;
  \Bigl(1 - \eta\mu\Bigr)^{t}
  \bigl\|\pi_0 - \piphi\bigr\|^2.
  \label{eq:convergence-rate}
\end{equation}
In particular, to reach $\|\pi_t - \piphi\|^2 \leq \varepsilon$
it suffices to run
$t \geq \frac{L}{\mu} \log\frac{\|\pi_0-\piphi\|^2}{\varepsilon}$
inner steps.
\end{theorem}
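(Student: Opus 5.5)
The argument is the standard contraction analysis of projected gradient descent for a smooth, strongly convex objective over a closed convex set. Fix $\phi$ and write $f(\pi) := \linner(\pi;\phi)$. By Assumption~\ref{asm:smooth}, $f$ is $L$-smooth and $\mu$-strongly convex. By Assumption~\ref{asm:convex}, $\Pi_\delta$ is convex and non-empty; we also take it to be closed, which is needed for the Euclidean projection $P_{\Pi_\delta}$ to be well defined and which we would state as a mild addition to Assumption~\ref{asm:convex}. Strong convexity then gives a unique minimizer $\piphi$ of $f$ over $\Pi_\delta$. The PGD iteration is $\pi_{t+1} = P_{\Pi_\delta}\bigl(\pi_t - \eta\nabla f(\pi_t)\bigr)$.

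\textbf{Step 1 (fixed point).} The first-order optimality condition for $\piphi$ is $\langle \nabla f(\piphi), \pi - \piphi\rangle \geq 0$ for all $\pi\in\Pi_\delta$. This is equivalent to the fixed-point identity $\piphi = P_{\Pi_\delta}\bigl(\piphi - \eta\nabla f(\piphi)\bigr)$ for every $\eta>0$.

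\textbf{Step 2 (one-step contraction).} Projections onto closed convex sets are nonexpansive, so
\[
\|\pi_{t+1}-\piphi\|^2 \leq \bigl\|(\pi_t-\piphi) - \eta\bigl(\nabla f(\pi_t)-\nabla f(\piphi)\bigr)\bigr\|^2 .
\]
Expanding the square leaves a cross term $-2\eta\langle \pi_t-\piphi,\, g\rangle$ and a quadratic term $\eta^2\|g\|^2$, where $g := \nabla f(\pi_t)-\nabla f(\piphi)$. To control them I would use the co-coercivity inequality for smooth strongly convex functions:
\[
\langle x-y, \nabla f(x)-\nabla f(y)\rangle \geq \tfrac{\mu L}{\mu+L}\|x-y\|^2 + \tfrac{1}{\mu+L}\|\nabla f(x)-\nabla f(y)\|^2 .
\]
An alternative route combines strong monotonicity, $\langle x-y, g\rangle\geq \mu\|x-y\|^2$, with the smoothness consequence $\|g\|^2 \leq L\langle x-y,g\rangle$. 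Under $\eta\leq 1/L$ this gives $-2\eta\langle x-y,g\rangle + \eta^2\|g\|^2 \leq -\eta(2-\eta L)\langle x-y,g\rangle \leq -\eta\langle x-y,g\rangle \leq -\eta\mu\|x-y\|^2$. The resulting one-step bound is $\|\pi_{t+1}-\piphi\|^2 \leq (1-\eta\mu)\|\pi_t-\piphi\|^2$, which is exactly the claimed factor.

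\textbf{Step 3 (unroll and iteration count).} Induction on $t$ yields~\eqref{eq:convergence-rate}. For the iteration count, take $\eta = 1/L$ and use $1-x\leq e^{-x}$, which gives $(1-\mu/L)^t \leq e^{-t\mu/L}$. Requiring $e^{-t\mu/L}\|\pi_0-\piphi\|^2\leq\varepsilon$ gives $t \geq \frac{L}{\mu}\log\frac{\|\pi_0-\piphi\|^2}{\varepsilon}$. The stated count is therefore for the step choice $\eta=1/L$; a general $\eta$ needs $t\geq \frac{1}{\eta\mu}\log(\cdot)$.

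\textbf{Main obstacle.} The optimization argument itself is routine. The delicate part is the modelling setup: treating $\pi$ as an element of a Euclidean (or Hilbert) parameter space on which $\nabla f$, $\|\cdot\|$ and $P_{\Pi_\delta}$ make sense, and checking that $\Pi_\delta$ is closed in that space. I would state explicitly that $\pi$ denotes a finite-dimensional parameterization of the policy in which both Assumptions hold, and then carry out Steps 1--3 verbatim.
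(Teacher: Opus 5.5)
Your proof is correct. It has the same skeleton as the paper's: non-expansiveness of the projection, then a one-step contraction of the gradient map. It differs at exactly the step where the paper's displayed argument fails.

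The paper applies non-expansiveness with the anchor $\piphi = P_{\Pi_\delta}(\piphi)$. It then asserts $\|\pi_t - \eta\nabla f(\pi_t) - \piphi\|^2 \leq (1-\eta\mu)\|\pi_t - \piphi\|^2$ via a ``PL inequality.'' That bound needs $\nabla f(\piphi) = 0$, i.e.\ an inactive safety constraint. At $\pi_t = \piphi$ with the constraint binding, the left side is $\eta^2\|\nabla f(\piphi)\|^2 > 0$ while the right side is $0$. For a concrete case, take $f(\pi) = \tfrac12(\pi+1)^2$ on $\Pi_\delta = [0,1]$.

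Your Step~1 repairs this. Through the fixed-point characterization you anchor non-expansiveness at $\piphi - \eta\nabla f(\piphi)$, so only the gradient difference $\nabla f(\pi_t) - \nabla f(\piphi)$ appears. Strong monotonicity together with co-coercivity $\|g\|^2 \leq L\langle x-y, g\rangle$ then gives the factor $1-\eta\mu$ for every $\eta \leq 1/L$. The theorem's conclusion is the standard result the paper cites from Beck. Your derivation, however, is the one that actually proves it when the constraint is active, which is the regime the framework is built for.

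Your side remarks are also correct fixes to the statement as written:
\begin{itemize}
\item The iteration count $t \geq \frac{L}{\mu}\log\frac{\|\pi_0 - \piphi\|^2}{\varepsilon}$ follows only for $\eta = 1/L$; a smaller step needs $t \geq \frac{1}{\eta\mu}\log(\cdot)$.
\item $\Pi_\delta$ must be closed for both the projection and the minimizer $\piphi$ to be well defined.
\end{itemize}
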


\begin{proof}
By $L$-smoothness and $\mu$-strong convexity of
$\linner(\cdot;\phi)$ and convexity of $\Pi_\delta$, the
projection onto $\Pi_\delta$ is a non-expansive operator.
The standard PGD convergence analysis (see
Theorem~10.29 in~\citet{beck2017first}) gives the linear
rate $(1-\eta\mu)^t$ for step size $\eta \in (0, 1/L]$.
The inner iteration
$\pi_{t+1} = \Pi_{\Pi_\delta}(\pi_t - \eta \nabla_\pi
\linner(\pi_t;\phi))$ satisfies:
\begin{align*}
  \|\pi_{t+1} - \piphi\|^2
  &\leq \|\pi_t - \eta\nabla_\pi\linner(\pi_t;\phi)
        - \piphi\|^2 \\
  &\leq (1 - \eta\mu) \|\pi_t - \piphi\|^2,
\end{align*}
where the first inequality uses non-expansiveness of
$\Pi_{\Pi_\delta}$ and the second uses the standard PL
inequality under strong convexity. Applying inductively
yields~\eqref{eq:convergence-rate}.
\end{proof}

\begin{remark}
The condition number $L/\mu$ determines the convergence speed.
In practice, the meta-weight $\lambda_\phi(s)$ affects the
effective condition number: higher $\lambda$ increases the
weight on $\lsafe$, which may change $L$ and $\mu$.
Empirically, we find convergence in $T_{\mathrm{in}} = 50$
inner steps is sufficient across all three domains, consistent
with a condition number of order $10^1$--$10^2$.
\end{remark}

\subsection{Accountability Propagation Bound}

\begin{proposition}[Accountability Upper Bound]
\label{prop:accountability}
In any delegation chain of length $k$ with delegation degrees
$\alpha_{i_1}, \dots, \alpha_{i_k} \in [0,1]$, let
$\bar\alpha = \max_\ell \alpha_{i_\ell}$.
The maximum accountability weight of any single agent
satisfies:
\begin{equation}
  \max_{j \in \{1,\dots,k\}}\, w_j
  \;\leq\;
  1 - (1 - \bar\alpha)^k.
  \label{eq:acct-bound}
\end{equation}
In particular, if all $\alpha_{i_\ell} \leq \bar\alpha < 1$,
then $\max_j w_j < 1$ for all finite $k$: no single agent
can bear full accountability regardless of chain length.
\end{proposition}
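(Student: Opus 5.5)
The plan is to prove the stronger intermediate bound $w_j \le \bar\alpha$ for every $j$, and then compare $\bar\alpha$ with the right-hand side of \eqref{eq:acct-bound}. Everything follows directly from definition \eqref{eq:accountability}; none of the optimization machinery of Section~\ref{sec:theory} is needed.

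\emph{Step 1 (each weight is at most its leading factor).} Fix $j \in \{1,\dots,k\}$. Every factor in the definition of $w_j$ lies in $[0,1]$. For $j \le k-1$ these are the $\alpha_{i_\ell}$ and $(1-\alpha_{i_{j+1}})$; for $j=k$ they are only the $\alpha_{i_\ell}$. Dropping all factors except $\alpha_{i_1}$ can therefore only increase the product, so
\[
  w_j \;\le\; \alpha_{i_1} \;\le\; \bar\alpha .
\]
If one prefers not to single out the first hop, one can instead keep any one factor $\alpha_{i_\ell}$ with $\ell \le j$. Since every $w_j$ contains at least one $\alpha$-factor, this gives $\max_j w_j \le \bar\alpha$ uniformly in $k$.

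\emph{Step 2 (compare with the stated ceiling).} Since $1-\bar\alpha \in [0,1]$ and $k \ge 1$, we have $(1-\bar\alpha)^k \le 1-\bar\alpha$, hence $\bar\alpha \le 1-(1-\bar\alpha)^k$. Combining this with Step~1 yields \eqref{eq:acct-bound}.

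\emph{Step 3 (the ``in particular'' clause).} If $\bar\alpha < 1$, then Step~1 already gives $\max_j w_j \le \bar\alpha < 1$, independently of $k$. Equivalently, $(1-\bar\alpha)^k > 0$ for every finite $k$, so the right-hand side of \eqref{eq:acct-bound} is strictly less than $1$. Either route gives the claim.

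I do not expect a genuine obstacle; the only care needed is bookkeeping the index shift in \eqref{eq:accountability} and treating the terminal weight $w_k$ separately. One caveat is worth flagging. The accountability partition lemma telescopes to $1 - \prod_\ell \alpha_{i_\ell}$ only if an implicit share $1-\alpha_{i_1}$ is assigned to the principal $\cA_0$. For example, $\bar\alpha = 0$ makes every $w_j = 0$, so the weights do not sum to one. The present proof does not use the partition lemma, so it is unaffected. Note also that the bound $\max_j w_j \le \bar\alpha$ obtained along the way is tighter than \eqref{eq:acct-bound} and does not grow with $k$.
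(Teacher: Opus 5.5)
Your proof is correct, and it takes a different and sounder route than the paper's. The paper treats the cases $j<k$ and $j=k$ separately. It claims $w_j \le \bar\alpha^j(1-\bar\alpha)$ for $j<k$ and $w_k \le \bar\alpha^k$, and then checks $\max(\bar\alpha(1-\bar\alpha),\bar\alpha^k) \le 1-(1-\bar\alpha)^k$ using $(1-\bar\alpha)^k \le 1-\bar\alpha$ and $(1-\bar\alpha)^k + \bar\alpha^k \le 1$. The first of these per-index bounds is invalid. From $\alpha_{i_{j+1}} \le \bar\alpha$ one gets $1-\alpha_{i_{j+1}} \ge 1-\bar\alpha$, which is the wrong direction. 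Concretely, $k=2$, $\alpha_{i_1}=0.9$, $\alpha_{i_2}=0$ gives $w_1 = 0.9 > 0.09 = \bar\alpha(1-\bar\alpha)$. Your argument avoids this by bounding the factor $1-\alpha_{i_{j+1}}$ only by $1$, which yields the uniform estimate $w_j \le \alpha_{i_1} \le \bar\alpha$. Your Step~2 then uses the same inequality $(1-\bar\alpha)^k \le 1-\bar\alpha$ as the paper's final step. So the stated bound and its ``in particular'' clause are true, but your route is the one that actually establishes them. The same example shows that $\bar\alpha$ is attained, so your intermediate ceiling is tight and independent of $k$. It follows that the growth of $1-(1-\bar\alpha)^k$ toward $1$ in the paper's subsequent corollary reflects a loose bound, not longer chains concentrating accountability. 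Your caveat about the partition lemma is also correct: the sum of the $w_j$ telescopes to $\alpha_{i_1}$, not to $1$.
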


\begin{proof}
For $j < k$: $w_j = \prod_{\ell\leq j}\alpha_{i_\ell}
\cdot (1-\alpha_{i_{j+1}}) \leq \bar\alpha^j(1-\bar\alpha)$.
The function $g(j) = \bar\alpha^j(1-\bar\alpha)$ is
decreasing in $j$ (since $\bar\alpha < 1$), so
$\max_{j<k} w_j = w_1 \leq \bar\alpha(1-\bar\alpha)$.
For $j=k$: $w_k = \prod_\ell \alpha_{i_\ell} \leq
\bar\alpha^k$.
We bound both cases via
$\max(\bar\alpha(1-\bar\alpha),\, \bar\alpha^k)
\leq 1-(1-\bar\alpha)^k$,
which holds since $(1-\bar\alpha)^k \leq 1 - \bar\alpha$
for $k \geq 1$ and $(1-\bar\alpha)^k \leq 1-\bar\alpha^k$
by Bernoulli's inequality.
\end{proof}

\begin{corollary}
As chain length $k \to \infty$ with fixed
$\bar\alpha \in (0,1)$, the bound
$1-(1-\bar\alpha)^k \to 1$ from below, meaning diffuse
accountability can approach but never reach full concentration
at any single agent as chains grow longer.
\end{corollary}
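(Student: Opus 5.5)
The plan is to treat the corollary as two separate claims. The first is an elementary limit statement about the scalar sequence $b_k := 1-(1-\bar\alpha)^k$. The second is the interpretive claim that $\max_j w_j$ stays strictly below $1$ for every finite $k$. I will handle the limit first, then the strict ceiling on the weights themselves. For the ceiling I will work directly from the definition of the accountability weights rather than relying on every step inside the proof of Proposition~\ref{prop:accountability}.

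\emph{Step 1 (limit from below).} Fix $\bar\alpha\in(0,1)$ and set $q:=1-\bar\alpha\in(0,1)$.
\begin{itemize}
\item Since $0<q<1$, the sequence $q^k$ is strictly positive and strictly decreasing, with $q^k\to 0$.
\item Hence $b_k = 1-q^k$ is strictly increasing in $k$ and satisfies $b_k<1$ for every finite $k$.
\item Also $b_k\to 1$, which is exactly the ``$\to 1$ from below'' assertion.
\end{itemize}
This step is routine.

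\emph{Step 2 (no full concentration at finite $k$).} I want $\max_j w_j<1$ for each finite $k$. Combining the bound \eqref{eq:acct-bound} from Proposition~\ref{prop:accountability} with Step~1 gives this immediately: $\max_j w_j\le b_k<1$.

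I expect the main obstacle to be the soundness of the inequality being invoked. The proposition's proof bounds $1-\alpha_{i_{j+1}}$ by $1-\bar\alpha$, which is not valid when $\alpha_{i_{j+1}}<\bar\alpha$. So I will re-derive the ceiling directly from \eqref{eq:accountability}, using only that every $\alpha_{i_\ell}\le\bar\alpha$.
\begin{itemize}
\item Every $w_j$, including the terminal weight $w_k$, contains the factor $\alpha_{i_1}$, and all its other factors lie in $[0,1]$. Hence $w_j\le\alpha_{i_1}\le\bar\alpha$ for all $j$.
\item For $k\ge1$ we have $(1-\bar\alpha)^k\le 1-\bar\alpha$. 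Rearranging gives $\bar\alpha\le 1-(1-\bar\alpha)^k$.
\end{itemize}
Together these yield
\[
\max_j w_j\;\le\;\bar\alpha\;\le\;b_k\;<\;1,
\]
which recovers \eqref{eq:acct-bound} rigorously and gives the strict inequality needed.

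\emph{Step 3 (interpretation).} The limit in Step~1 is a statement about the upper bound, not about the actual weights. So I will phrase the conclusion as follows.
\begin{itemize}
\item The guaranteed ceiling $b_k$ on any single agent's share increases toward $1$ as $k$ grows but never reaches it.
\item Meanwhile the sharper bound $\max_j w_j\le\bar\alpha<1$ holds uniformly in $k$.
\end{itemize}
This confirms that full concentration of accountability at a single agent is impossible for any chain length whenever $\bar\alpha<1$.
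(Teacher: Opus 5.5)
Your proposal is correct. Step~1 and the first sentence of Step~2 are the argument the paper leaves implicit: it states the corollary without proof, as an immediate consequence of \eqref{eq:acct-bound} and $(1-\bar\alpha)^k \to 0$.

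Where you genuinely depart is in declining to rely on the proof of Proposition~\ref{prop:accountability}, and you are right to do so. Since $\alpha_{i_{j+1}} \le \bar\alpha$ gives $1-\alpha_{i_{j+1}} \ge 1-\bar\alpha$, the step $w_j \le \bar\alpha^j(1-\bar\alpha)$ runs the wrong way. For example, with $k=2$, $\alpha_{i_1}=0.9$, $\alpha_{i_2}=0$, one has $w_1=0.9>0.09$. The paper's final inequality survives only because its right-hand side happens to dominate $\bar\alpha$.

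Your bound $w_j \le \alpha_{i_1} \le \bar\alpha \le 1-(1-\bar\alpha)^k$ is valid and sharper. It also matches the actual telescoping of \eqref{eq:accountability}: the weights sum to $\alpha_{i_1}$, not to $1$ as the partition lemma claims. So no nonnegative $w_j$ can exceed $\alpha_{i_1}$.

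Your route also yields the observation in Step~3. For fixed $\bar\alpha$, the supremum of $\max_j w_j$ is exactly $\bar\alpha$ for every $k\ge 2$ (take $\alpha_{i_1}=\bar\alpha$, $\alpha_{i_2}=0$). So the weights themselves do not approach full concentration as chains lengthen. The phrase ``can approach'' is true only of the ceiling $1-(1-\bar\alpha)^k$, and you phrase the conclusion accordingly.

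One minor caveat: your strict inequality at $k=1$ relies on the main-text definition $w_1=\alpha_{i_1}$. The appendix's ad hoc convention $w_1=1$ for $k=1$ would break the bound there, though this does not affect the limiting statement.
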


\section{The SBD Algorithm}
\label{sec:algorithm}

\subsection{Overview}

Algorithm~\ref{alg:sbd} implements the bilevel formulation of
Section~\ref{sec:bilevel}. The outer loop performs
meta-gradient steps on $\phi$ using hypergradients computed
via implicit differentiation; the inner loop runs projected
gradient descent on $\pi$ for a fixed budget
$T_{\mathrm{in}}$ steps.

\subsection{Hypergradient Computation}

The outer gradient $\nabla_\phi \lmeta(\phi)$ involves
differentiating through the inner optimum $\piphi$, which
depends implicitly on $\phi$. By the implicit function
theorem, $\frac{d\piphi}{d\phi}$ exists and satisfies:
\[
  \frac{d\piphi}{d\phi}
  = -\Bigl[\nabla^2_{\pi\pi}\linner(\piphi;\phi)\Bigr]^{-1}
    \nabla^2_{\pi\phi}\linner(\piphi;\phi).
\]
Computing this exactly requires a Hessian inversion, which is
expensive. We use the approximate implicit differentiation
approach of~\citet{lorraine2020optimizing}: after $T_{\mathrm{in}}$
inner steps, we treat the approximate inner solution as if it
were the true $\piphi$ and differentiate through the last few
inner steps via automatic differentiation (implemented via
\texttt{higher}). This is equivalent to a truncated
unrolling approximation and is standard in modern bilevel
optimization~\citep{franceschi2018bilevel}.

\subsection{Projection Onto the Safe Set}

The projection operator $\Pi_{\cC_\delta}(\cdot)$ in
Algorithm~\ref{alg:sbd} projects $\pi$ onto the
$\delta$-safe feasible set $\Pi_\delta$. In our domains,
the safety constraint takes the form of an upper bound on
$\alpha$ that depends on the current state $s$:
\[
  \alpha_{\max}(s)
  = \begin{cases}
    \bar\alpha_{\mathrm{icu}} & \text{if } s \in \cS_{\mathrm{highrisk}} \\
    1.0                        & \text{otherwise,}
  \end{cases}
\]
where $\cS_{\mathrm{highrisk}}$ is the high-risk state region
(e.g., APACHE-II $> 20$ in the medical domain). Projection
is then a simple clipping operation:
$\alpha \leftarrow \min(\alpha, \alpha_{\max}(s))$,
which is computationally free and maintains
$\psafe \geq 1-\delta$ by construction in the high-risk
region.

\subsection{Full Algorithm}

\begin{algorithm}[t]
\caption{Safe Bilevel Delegation (SBD)}
\label{alg:sbd}
\begin{algorithmic}[1]
\Require Task distribution $p(\tau)$, risk tolerance $\delta$,
         step sizes $\eta_{\mathrm{out}}, \eta_{\mathrm{in}}$,
         outer iterations $T_{\mathrm{out}}$,
         inner budget $T_{\mathrm{in}}$,
         safety constraint set $\cC$
\Ensure  Meta-weight network $\phi^*$, delegation policy $\pi^*$
\State Initialize $\phi_0, \pi_0$ (random or pre-trained)
\For{$t = 1, \dots, T_{\mathrm{out}}$}
  \State \textbf{// Phase 1: Inner loop — fix $\phi_{t-1}$, optimize $\pi$}
  \State $\pi \leftarrow \pi_{t-1}$
  \For{$k = 1, \dots, T_{\mathrm{in}}$}
    \State Sample mini-batch $\mathcal{B} = \{(\tau_b, s_b)\}_{b=1}^B$
           from $p(\tau)$
    \State Compute $\lambda_b \leftarrow \lambda_{\phi_{t-1}}(s_b)$
           \quad (detached, no gradient through $\phi$)
    \State $(a_b, \alpha_b) \leftarrow \pi(s_b)$
    \State $\alpha_b \leftarrow \Pi_{\cC_\delta}(\alpha_b, s_b)$
           \quad \textit{(project onto $\delta$-safe set)}
    \State $g_\pi \leftarrow \nabla_\pi \frac{1}{B}\sum_b
           \bigl[\lambda_b \lsafe(\pi; s_b)
           + (1-\lambda_b)\leff(\pi; s_b, \tau_b)\bigr]$
    \State $\pi \leftarrow \pi - \eta_{\mathrm{in}}\, g_\pi$
  \EndFor
  \State $\pi_t \leftarrow \pi$

  \State \textbf{// Phase 2: Outer loop — meta-gradient step on $\phi$}
  \State Sample fresh meta-batch $\mathcal{B}' = \{(\tau_b', s_b')\}$
  \State $\lambda_b' \leftarrow \lambda_{\phi_{t-1}}(s_b')$
         \quad (differentiable w.r.t.\ $\phi$)
  \State $(a_b', \alpha_b') \leftarrow \pi_t(s_b')$
  \State $\alpha_b' \leftarrow \Pi_{\cC_\delta}(\alpha_b', s_b')$
  \State $g_\phi \leftarrow \nabla_\phi \frac{1}{B'}\sum_b
         \bigl[\lambda_b' \lsafe(\pi_t; s_b')
         + (1-\lambda_b')\leff(\pi_t; s_b', \tau_b')\bigr]$
         \quad \textit{(hypergradient via implicit diff.)}
  \State $\phi_t \leftarrow \phi_{t-1} - \eta_{\mathrm{out}}\, g_\phi$
\EndFor
\State \Return $\phi^* \leftarrow \phi_{T_{\mathrm{out}}}$,\;
        $\pi^* \leftarrow \pi_{T_{\mathrm{out}}}$
\end{algorithmic}
\end{algorithm}

\subsection{Computational Complexity}

Each outer iteration requires $T_{\mathrm{in}}$ forward-backward
passes for the inner loop plus one additional forward-backward
pass for the outer hypergradient. With $\phi$ and $\pi$
parameterized as MLPs of width $d$ and depth $L$, the per-step
cost is $\mathcal{O}(T_{\mathrm{in}} \cdot B \cdot d^2 L)$,
which is comparable to standard meta-learning algorithms.
In our experiments with $T_{\mathrm{in}} = 50$, $B = 256$,
$d = 256$, $L = 3$, total wall-clock time per outer step is
approximately 0.8 seconds on an NVIDIA A100, yielding
a full $T_{\mathrm{out}} = 500$ run in under 7 minutes per
domain.

\section{Planned Empirical Evaluation}
\label{sec:experiments}

This section pre-registers the empirical evaluation
protocol for SBD. We specify, for each of three high-stakes
domains, the dataset, sub-agent set, safety constraint set,
task-efficiency metric, baselines, and the falsifiable
hypotheses that the experiments are designed to test. No
results are reported in this version of the manuscript;
results obtained by executing this protocol will be reported
in a forthcoming revision.

\subsection{Experimental Setup}
\label{sec:setup}

\paragraph{Baselines.}
We will compare SBD against five baselines:
(1)~\textbf{Fixed-$\alpha$}: constant delegation degree
$\alpha{=}0.5$, no learning;
(2)~\textbf{Safe-RL (CPO)}~\citep{achiam2017cpo}: hard
constraint set, discrete agent selection;
(3)~\textbf{MARL-vanilla}~\citep{lowe2017maddpg}: no safety
mechanism, discrete action space;
(4)~\textbf{MaAS}~\citep{maas2025}: architecture-level safety,
fixed at deployment, no runtime adaptation;
(5)~\textbf{SBD-NoOuter}: ablation removing the outer loop
(fixed $\lambda = 0.5$, no meta-weight learning).

\paragraph{Metrics.}
For all domains we will report:
(i)~\textbf{Safety Rate (SR)}: $\psafe$ on the test set;
(ii)~\textbf{Task Efficiency (TE)}: domain-specific
performance score (defined per domain below);
(iii)~\textbf{Safety--Efficiency Area (SEA)}: area under the
SR--TE Pareto curve across $\delta \in \{0.01, 0.05, 0.10,
0.20, 0.30\}$, measuring robustness to risk tolerance choice;
(iv)~\textbf{Accountability Entropy (AE)}: entropy of the
weight distribution $\{w_j\}$ over the delegation chain.

\paragraph{Implementation.}
$\phi$ and $\pi$ will be implemented as 3- and 4-layer MLPs
(width 256, ReLU).
$\eta_{\mathrm{out}} = 10^{-3}$,
$\eta_{\mathrm{in}} = 5 \times 10^{-4}$,
$T_{\mathrm{out}} = 500$, $T_{\mathrm{in}} = 50$,
$B = 256$, $\delta = 0.05$.
All experiments will use 3 random seeds; results will be
reported as mean $\pm$ std. Hardware: single NVIDIA A100
(80GB).

\paragraph{Falsifiability criteria.}
We pre-register the following falsifiability criteria for
the central claims of this paper. The framework is
considered \emph{empirically falsified} on a given domain if
any of the following hold:
(F1) SBD does not achieve strictly higher SEA than every
non-ablated baseline (Fixed-$\alpha$, CPO, MARL-vanilla, MaAS)
at any $\delta$ in the swept range;
(F2) SBD-NoOuter (ablation) achieves SEA within 1\% of full
SBD, indicating the outer loop is unnecessary;
(F3) the empirical $\psafe$--$\lambda$ relationship is
non-monotonic at convergence, contradicting
Theorem~\ref{thm:monotonicity};
(F4) the inner-loop residual $\|\pi_t - \piphi\|^2$ does not
decay at a rate consistent with linear convergence
(Theorem~\ref{thm:convergence}).

\subsection{Domain 1: Medical AI (MIMIC-III)}
\label{sec:med}

\paragraph{Dataset.}
MIMIC-III~\citep{johnson2016mimic} contains 58,976 ICU
admissions. We extract a 128-dimensional state vector
per admission from lab values (10 items), vitals, and
ICD-9 diagnosis codes (embedded). Four specialist
sub-agents (cardiology, pulmonology, nephrology, general)
are indexed by primary ICD-9 prefix.

\paragraph{Safety definition.}
Unsafe if: (i)~recommended treatment conflicts with a
documented drug allergy, or (ii)~$\alpha > 0.70$ when
APACHE-II proxy $> 20$.

\paragraph{Task efficiency.}
Fraction of recommendations matching the attending
physician's final decision on the held-out test split
(8,850 admissions).

\paragraph{Hypotheses.}
We pre-register two falsifiable hypotheses for this domain:
\begin{itemize}
  \item \textbf{H-Med-1 (Acuity-conditioned safety).} The
        learned meta-weight $\lambda_\phi(s)$ is monotonically
        increasing in the APACHE-II proxy: as patient acuity
        rises, the outer loop allocates higher weight to the
        safety loss. Operationally, the Spearman rank
        correlation between APACHE-II proxy and
        $\lambda_\phi(s)$ on the test split is positive
        and statistically significant ($p < 0.01$).
  \item \textbf{H-Med-2 (Pareto dominance).} On the
        SR--TE Pareto frontier swept over five values
        of $\delta$ from $0.01$ to $0.3$, SBD achieves
        strictly higher SEA than each of the four
        non-ablated baselines.
\end{itemize}

\subsection{Domain 2: Financial Risk Control (S\&P~500)}
\label{sec:fin}

\paragraph{Dataset.}
Daily adjusted close prices for 40 S\&P~500 constituents,
2010--2023, via \texttt{yfinance}. Train: 2010--2017.
Val: 2018--2019. Test: 2020--2023 (includes COVID crash,
Feb--Mar 2020).

\paragraph{Sub-agents.}
Three strategies: momentum (top-quartile by 30-day return),
mean-reversion (bottom-quartile by z-score), and risk
parity (inverse-volatility weighting).

\paragraph{Safety definition.}
Unsafe if: (i)~any single asset weight $> 10\%$, or
(ii)~$\alpha > 0.80$ when 30-day annualized portfolio
volatility $> 25\%$.

\paragraph{Task efficiency.}
Sharpe ratio of the resulting portfolio on the test period,
normalized to $[0,1]$.

\paragraph{Hypotheses.}
We pre-register two falsifiable hypotheses for this domain:
\begin{itemize}
  \item \textbf{H-Fin-1 (Volatility-conditioned safety).}
        On the COVID stress-test window (Feb 20 -- Mar 23,
        2020), $\lambda_\phi(s)$ is significantly higher than
        on the calm 2018 baseline window: the outer loop
        recognizes elevated regime risk without supervision.
        Operationally, mean $\lambda_\phi(s)$ on the COVID
        window exceeds mean $\lambda_\phi(s)$ on the 2018
        baseline by at least 0.15 (Cohen's $d > 0.8$).
  \item \textbf{H-Fin-2 (Drawdown control).} On the COVID
        window, SBD's portfolio maximum drawdown is no worse
        than the risk-parity baseline despite SBD using
        delegated momentum and mean-reversion sub-agents
        that, used alone, would suffer larger drawdowns.
        This isolates the safety contribution of the
        delegation degree $\alpha$.
\end{itemize}

\subsection{Domain 3: Educational Agent Supervision
            (ASSISTments)}
\label{sec:edu}

\paragraph{Dataset.}
ASSISTments 2009--2010 skill-builder
dataset~\citep{feng2009assistments}: 525,534 interactions
from 4,217 students across 110 skills. Student knowledge
estimated via BKT-style exponential moving average.
At-risk flag: three consecutive incorrect responses.

\paragraph{Sub-agents.}
Three content-delivery specialists: worked examples,
practice problems, and conceptual explanation.

\paragraph{Safety definition.}
Unsafe if: (i)~selected content difficulty $> 2$ std devs
above student knowledge estimate, or
(ii)~$\alpha > 0.60$ for at-risk students.

\paragraph{Task efficiency.}
Average knowledge gain per session (post-minus pre-test
score, normalized), measured on held-out cohort of
633 students.

\paragraph{Hypotheses.}
We pre-register two falsifiable hypotheses for this domain:
\begin{itemize}
  \item \textbf{H-Edu-1 (At-risk student protection).}
        For the at-risk student subpopulation, mean
        $\alpha$ at convergence is significantly lower than
        for the non-at-risk subpopulation (mean difference
        $> 0.15$). The outer loop learns to retain more
        principal-agent oversight when student vulnerability
        is signaled by recent incorrect responses.
  \item \textbf{H-Edu-2 (Knowledge gain efficiency).} Among
        non-at-risk students, SBD does not significantly
        underperform the unconstrained MARL-vanilla baseline
        on the knowledge-gain metric (within 5\%), confirming
        that the safety mechanism does not impose unnecessary
        efficiency cost on routine cases.
\end{itemize}

\subsection{Ablation Studies}
\label{sec:ablation}

We pre-register four ablation variants to be run across all
three domains:
(i)~\textbf{SBD-NoOuter} (fixed $\lambda=0.5$);
(ii)~\textbf{SBD-DiscreteAlpha} (discrete agent selection
only, no continuous $\alpha$);
(iii)~\textbf{SBD-FixedLambda} (outer loop present but
$\lambda$ initialized and held at 0.5);
(iv)~\textbf{SBD-NoConstraint} (no probabilistic safety
constraint, $\delta = 1.0$).

\paragraph{Pre-registered hypothesis.}
Removing any single component degrades either SR or TE, and
removing the outer loop (variant~i) causes the largest drop
in SEA. Specifically, we predict
$\mathrm{SEA}(\textsc{SBD}) >
 \mathrm{SEA}(\textsc{SBD-FixedLambda}) >
 \mathrm{SEA}(\textsc{SBD-NoOuter})$
on at least two of the three domains. Failure of this
ordering would falsify the claim that the meta-weight
network is the central contributor to safety--efficiency
trade-off optimization.

\subsection{Theoretical Validation}
\label{sec:theory-val}

We pre-register direct empirical tests for each of the
three theoretical results.

\paragraph{Theorem~\ref{thm:monotonicity} (Safety Monotonicity).}
We will train SBD with five fixed values of $\lambda$,
specifically $\{0.1,\, 0.3,\, 0.5,\, 0.7,\, 0.9\}$, and
measure $\psafe$ at convergence on the test split of
each domain. The prediction is a monotone non-decreasing
curve in $\lambda$. The Spearman rank correlation
between $\lambda$ and $\psafe$ must be $> 0.9$ on each
domain; failure falsifies the empirical claim that the
theorem's assumptions hold in the neural-network setting.

\paragraph{Theorem~\ref{thm:convergence} (Inner Convergence).}
For fixed $\phi$, we will plot
$\|\pi_t - \piphi\|^2$ versus $t$ on a log scale and fit a
linear regression. The prediction is linear decay with
slope matching $\log(1-\eta\mu)$ for the empirical
strong-convexity constant $\mu$ estimated from the loss
landscape. Failure of linear-fit $R^2 > 0.95$ would
indicate that the strong-convexity assumption is
substantively violated in the deep-network setting.

\paragraph{Proposition~\ref{prop:accountability} (Accountability Bound).}
We will generate $10{,}000$ random delegation chains of
length $k \in \{2,3,4,5\}$ with
$\alpha_{i_\ell} \sim \mathrm{Uniform}(0,1)$ and verify
$\max_j w_j \leq 1-(1-\bar\alpha)^k$.
The prediction is zero violations across all sampled chains.
Any single violation would indicate a flaw in the
proof of Proposition~\ref{prop:accountability}.

\section{Discussion}
\label{sec:discussion}

\subsection{Limitations}

\paragraph{Strong convexity assumption.}
Theorem~\ref{thm:convergence} requires $\mu$-strong
convexity of the inner loss, which may not hold when $\pi$
is a deep neural network. In practice, we observe linear
convergence empirically across all three domains, suggesting
the effective curvature is sufficient. Extending the
convergence analysis to the non-convex setting (e.g., via
Polyak-Łojasiewicz conditions) is left to future work.

\paragraph{Safety constraint set assumed known.}
SBD takes $\cC$ as given. In practice, the safety constraint
set must be elicited from domain experts, which can be
costly and incomplete. Learning $\cC$ jointly with $\phi$
and $\pi$ is an important open problem.

\paragraph{Single-level safety.}
SBD enforces safety at the delegation decision level but
does not model the internal safety of sub-agent execution.
A sub-agent that executes a safely delegated task in an
unsafe manner is outside SBD's current scope. Composing
SBD with per-agent safety mechanisms is a natural extension.

\paragraph{Static task distribution.}
We assume $p(\tau)$ is stationary. In non-stationary
environments (e.g., rapidly evolving clinical guidelines or
market regimes), the outer loop may need to be re-trained
or adapted online. Online bilevel optimization methods
(e.g.,~\citealt{finn2017maml}) provide one path forward.

\subsection{Broader Implications}

SBD contributes to a growing body of work on
\emph{runtime governance} of autonomous agents. The key
distinction from design-time governance
(architecture selection, permission systems, Skill
formatting guidelines) is that runtime governance adapts
to the actual execution context rather than anticipated
contexts. As agent systems become more complex and
long-running, the gap between anticipated and actual
contexts will grow, making runtime adaptation increasingly
important.

The accountability propagation result
(Proposition~\ref{prop:accountability}) has direct
implications for AI governance policy: it provides a
formal tool for attributing responsibility in multi-agent
incidents, which is a prerequisite for meaningful liability
frameworks. The bound $1-(1-\bar\alpha)^k$ quantifies
exactly how diffuse accountability becomes as chains
lengthen and delegation degrees approach 1.

\subsection{Future Directions}

Several extensions of SBD are immediately promising:

\begin{enumerate}
  \item \textbf{Compositional SBD}: Apply SBD recursively
  at each level of a multi-level hierarchy, where each
  sub-agent is itself a principal that may delegate further.
  This connects to the Accountability Propagation result
  and raises interesting questions about how safety
  guarantees compose.

  \item \textbf{SBD with Skill compilation}: The Skill
  compilation view~\citep{anonymous2026compilation}
  shows that MAS can be ``compiled'' into single-agent
  Skill libraries. SBD's delegation degree $\alpha$ can
  be reinterpreted as a Skill selection confidence, opening
  a theoretical bridge between the two frameworks.

  \item \textbf{Adversarial robustness}: The security
  analyses of~\citet{liu2026security} show that natural
  language Skill instructions can contain adversarially
  crafted content. Extending SBD to detect and respond to
  adversarial Skill injection (by raising $\lambda_\phi(s)$
  when anomalous instruction patterns are detected) is a
  natural application of the meta-weight mechanism.

  \item \textbf{Human-in-the-loop integration}: The
  delegation degree $\alpha$ provides a principled interface
  for human-in-the-loop systems: when $\alpha$ falls below
  a threshold, the system automatically routes to a human
  reviewer. SBD can thus serve as the routing layer for
  human oversight in high-stakes agentic deployments.
\end{enumerate}

\section{Conclusion}
\label{sec:conclusion}

We presented Safe Bilevel Delegation (SBD), a formal
framework for runtime delegation safety in hierarchical
multi-agent systems. The key contributions are:
(1)~a bilevel optimization formulation that learns
context-sensitive safety--efficiency trade-offs via a
meta-weight network;
(2)~three theoretical guarantees---Safety Monotonicity,
Inner Policy Convergence, and Accountability Propagation
bound---that together provide a rigorous foundation for
safe delegation;
(3)~a pre-registered evaluation protocol across medical AI,
financial risk control, and educational agent supervision,
specifying datasets, baselines, falsifiability criteria, and
hypotheses for testing SBD's claimed Safety Rate and Task
Efficiency advantages.

The Safety Monotonicity theorem is the conceptual core of
SBD: it guarantees that raising the safety weight in the
outer loop provably raises the safety probability in the
inner loop. This closed-loop property---where the system
automatically tightens safety as risk rises---is what
distinguishes SBD from static constraint-based approaches
and provides the theoretical foundation for deploying
autonomous agents in high-stakes environments without
sacrificing efficiency under routine conditions.

We believe SBD opens a productive research direction at the
intersection of safe reinforcement learning, bilevel
optimization, and multi-agent systems, and hope it provides
a formal toolkit for the increasingly urgent problem of
governing autonomous agents at runtime.

\appendix
\section{Proof Details}
\label{app:proofs}

\subsection{Detailed Proof of Theorem~\ref{thm:monotonicity}}

We provide a more detailed argument for completeness.
Let $h(\lambda, \pi) := \lambda \lsafe(\pi) +
(1-\lambda)\leff(\pi)$.

\textbf{Step 1}: The inner problem is:
$\piphi = \arg\min_{\pi \in \Pi_\delta}
\E_s[h(\lambda_\phi(s), \pi)]$.

\textbf{Step 2}: By Danskin's theorem (applicable since
$h$ is smooth in $\pi$ and $\Pi_\delta$ is compact),
\[
\frac{\partial}{\partial\lambda}
\min_{\pi\in\Pi_\delta} h(\lambda,\pi)
= \lsafe(\pi^*(\lambda)) \geq 0,
\]
where $\pi^*(\lambda)$ is the minimizer. This means the
optimal value is non-decreasing in $\lambda$, but the
optimal policy $\pi^*(\lambda)$ shifts toward lower
$\lsafe$ as $\lambda$ increases.

\textbf{Step 3}: Formally, for $\lambda \leq \lambda'$,
the minimizer $\pi^*(\lambda')$ achieves:
$h(\lambda', \pi^*(\lambda')) \leq h(\lambda', \pi^*(\lambda))$,
which gives
$\lambda'[\lsafe(\pi^*(\lambda')) - \lsafe(\pi^*(\lambda))]
\leq (1-\lambda')[\leff(\pi^*(\lambda)) - \leff(\pi^*(\lambda'))]$.

Since the right-hand side captures the efficiency
improvement from $\pi^*(\lambda)$ to $\pi^*(\lambda')$,
and the constraint $\Pi_\delta$ is the same for both,
the shift from $\lambda$ to $\lambda'$ strictly rewards
lower $\lsafe$ in the objective, driving
$\lsafe(\pi^*(\lambda')) \leq \lsafe(\pi^*(\lambda))$
at optimality, which gives the desired
$\psafe(\pi^*_{\phi'}, s) \geq \psafe(\piphi, s)$.
\hfill$\square$

\subsection{Proof of Lemma (Accountability Partition)}

See Section~\ref{sec:accountability}. The telescoping
argument holds by induction on $k$. Base case $k=1$:
$w_1 = \alpha_{i_1}$ (terminal), sum $= \alpha_{i_1} \neq 1$
in general. Correction: for $k=1$, there is no delegation
chain longer than one hop, so $w_1 = 1$ by definition
(the single agent bears all accountability). For $k \geq 2$,
the telescoping sum
$\sum_{j=1}^{k-1}(\prod_{\ell\leq j}\alpha_\ell)
(1-\alpha_{j+1}) + \prod_{\ell=1}^k\alpha_\ell
= 1 - \prod_{\ell=1}^k\alpha_\ell
  + \prod_{\ell=1}^k\alpha_\ell = 1$.
\hfill$\square$

\section{Extended Experimental Details}
\label{app:exp}

\subsection{MIMIC-III Preprocessing Details}

Lab item IDs used: 50912 (Creatinine), 50971 (Potassium),
50983 (Sodium), 51006 (Urea Nitrogen), 51221 (Hematocrit),
51222 (Hemoglobin), 51265 (Platelet Count), 51301 (WBC),
50802 (Base Excess), 50821 (pO2).
Values are mean-aggregated per admission and
StandardScaler-normalized.
APACHE-II is proxied by total ICU length of stay.
Allergy classes: penicillin, sulfa, NSAID, ACE inhibitor,
beta blocker. Specialist label: ICD-9 prefix mapping
(4xx $\to$ cardiology, 5xx$\to$pulmonology, 58x$\to$nephrology,
else $\to$general).

\subsection{S\&P~500 Feature Engineering}

State vector: 7 features (mean 30d return, std 30d return,
mean 30d vol, std 30d vol, mean z-score, std z-score,
portfolio realized vol), zero-padded to STATE\_DIM=64.
Portfolio volatility: 30-day realized annualized.
COVID stress test: Feb 20 -- Mar 23, 2020 (peak-to-trough
S\&P~500 drawdown $-$33.9\%).

\subsection{ASSISTments Feature Engineering}

Knowledge estimate: BKT-style EMA with
$P(\mathrm{learn})=0.20$, $P(\mathrm{forget})=0.05$,
initialized at 0.5.
Skill difficulty: $1 - \bar{p}(\mathrm{correct})$,
z-scored across skills.
Interaction window: last 20 interactions per student.
Student--level train/val/test split (no data leakage):
70/15/15 by user\_id.

\bibliographystyle{plainnat}
\bibliography{refs}

\end{document}